\documentclass[letterpaper, 10 pt, conference]{ieeeconf}

\IEEEoverridecommandlockouts

\usepackage{amsmath,amssymb}
\usepackage{graphicx}
\usepackage{booktabs}
\usepackage{multirow}
\usepackage{float}
\usepackage{cite}
\usepackage{url}
\usepackage[hidelinks]{hyperref}
\usepackage{bm}

\makeatletter
\let\small\normalsize
\let\footnotesize\normalsize
\let\scriptsize\normalsize
\let\tiny\normalsize
\let\@IEEEabskeysecsize\normalsize
\makeatother

\hypersetup{
  pdftitle={Contact-Constrained Lower-Limb Joint-Offset Calibration for Humanoid Robots},
  pdfauthor={Kaixiang Lu, Haiyu Lan, Chunxiao Qiao, You Li, Chengyuan Luo, Enyu Li, Peiwen Lin, Chuang Wang},
  pdfsubject={Humanoid robot kinematic calibration},
  pdfkeywords={humanoid robots, kinematic calibration, joint encoder offsets, contact constraints, observability}
}

\newtheorem{proposition}{Proposition}

\DeclareMathOperator{\Exp}{Exp}
\DeclareMathOperator{\Log}{Log}

\DeclareMathOperator{\diag}{diag}

\newcommand{\dq}{\delta q}
\newcommand{\SE}{\mathrm{SE}(3)}

\newcommand{\R}{\mathbb{R}}
\newcommand{\rev}[1]{#1}
\newcommand{\revfigure}[1]{#1}
\newcommand{\revcolor}{}

\renewcommand{\QED}{}

\begin{document}
\raggedbottom

\title{\Large \bf
Contact-Constrained Lower-Limb Joint-Offset Calibration\\for Humanoid Robots
}

\author{Kaixiang Lu\textsuperscript{1}, Haiyu Lan\textsuperscript{2,*}, Chunxiao Qiao\textsuperscript{2}, You Li\textsuperscript{1},\\
Chengyuan Luo\textsuperscript{2}, Enyu Li\textsuperscript{2}, Peiwen Lin\textsuperscript{2}, Chuang Wang\textsuperscript{2}\\
\textsuperscript{1}Wuhan University\\
\textsuperscript{2}AGIBOT}

\maketitle
\renewcommand{\thefootnote}{\fnsymbol{footnote}}
\footnotetext[1]{Corresponding author: Haiyu Lan.}
\footnotetext[0]{This paper has been accepted for publication in \emph{IEEE Robotics and Automation Letters} (RA-L).}
\thispagestyle{empty}
\pagestyle{empty}

\begin{abstract}
Accurate joint encoder offsets are essential for kinematic consistency in humanoid lower limbs, yet existing calibration methods typically require external motion-capture systems or fiducial targets.
We present a self-contained calibration framework exploiting only onboard joint encoders and a pelvis-mounted IMU during static double-support contact.
The inter-foot transform from forward kinematics must stay constant when both feet are fixed; minimizing its posture-dependent dispersion yields a nonlinear least-squares problem over the 12-dimensional offset vector.
A Hessian eigenstructure analysis shows that parallel pitch axes induce a rotational coupling.
Orientation residuals then observe only the pitch-offset sum, while translation and posture diversity set the remaining numerical observability.
\rev{For the A3 pitch$\to$roll$\to$yaw ordering, hip-roll and hip-yaw excitation reduce hip-pitch coupling. A standing-posture knee prior then anchors the remaining weak pitch-chain decomposition.}
Simulation and real-machine injection tests show consistent recovery, and on held-out recordings calibration reduces foot-height RMS residuals from $4.26$ to $2.20$\,mm on A3 and from $8.03$ to $1.43$\,mm on A2.
\rev{An independent LiDAR-inertial reference checks the pitch-coupled channel. Removing an injected pitch offset moves the leg-odometry vertical drift back toward the LiDAR trajectory. A few static double-support stances thus provide contact-consistent corrections for well-excited directions. Individual offsets in the weak pitch chain remain prior-dependent.}
\end{abstract}

\section{INTRODUCTION}

Humanoid robots depend on accurate kinematic models for locomotion, whole-body control, and contact planning~\cite{Kuindersma2016Atlas,Hutter2016ANYmal}.
Joint encoder offsets, the persistent biases between encoder and true mechanical zero, accumulate along serial chains and cause foot-placement errors and degraded balance~\cite{Hollerbach1996CalibrationIndex}.
Even sub-degree per-joint offsets can generate centimeter-level distal errors at the foot sole.

Classical calibration identifies kinematic parameters by minimizing discrepancies between model predictions and external measurements from laser trackers, motion-capture systems, or fiducial targets~\cite{Mooring1991ManipulatorCalibration,Hollerbach1996CalibrationIndex}.
Such setups are impractical for repeated in-field recalibration.

A vision-based alternative observes onboard fiducial markers with onboard cameras~\cite{Birbach2012WholeBodySelfCalibration,Maier2015WholeBody}.
For the legs, however, the feet rarely enter the head-camera field of view during normal stances, making foot markers unreliable.
This motivates a calibration signal that needs no external view of the limb.

\begin{figure}[!t]
\centering
\includegraphics[width=0.70\columnwidth]{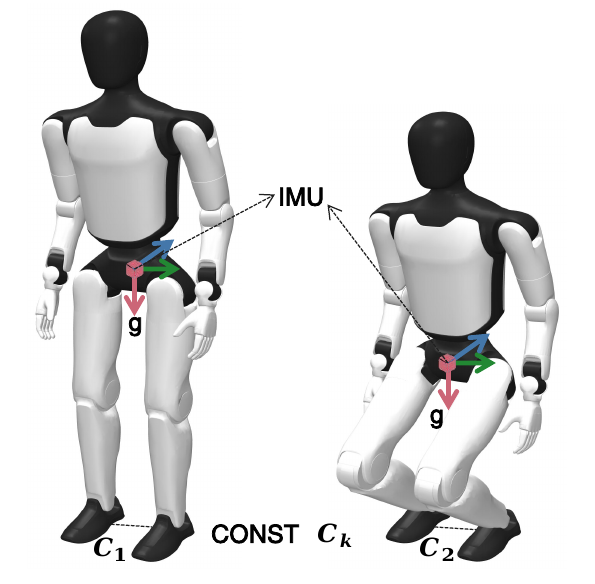}
\caption{Calibration principle. Upright and squat postures share the same fixed-foot transform $C_k$; the pelvis IMU provides gravity $g$.}
\label{fig:concept_overview}
\end{figure}

\begin{figure*}[!t]
\centering
\includegraphics[width=0.94\textwidth]{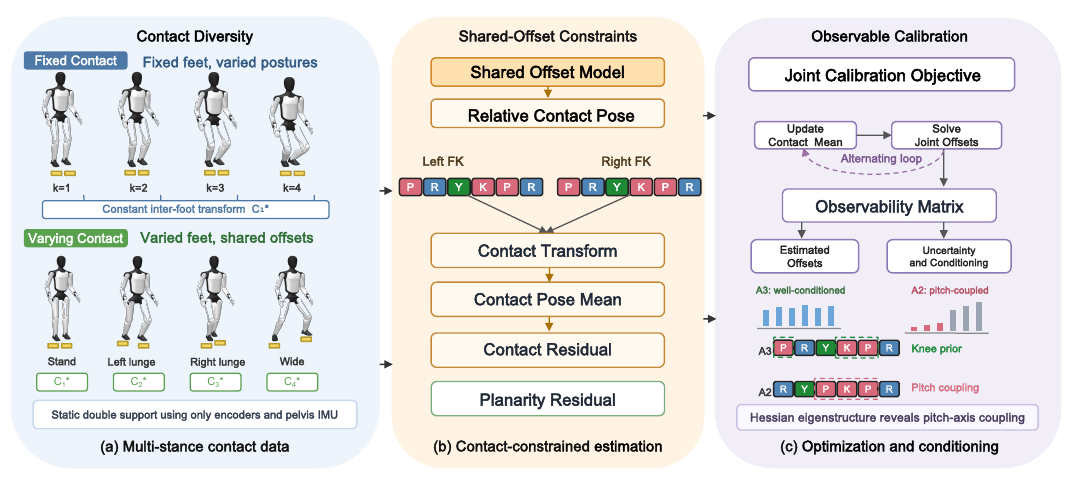}
\caption{Calibration pipeline: collect multi-stance double-support data, form contact and flat-ground residuals, and alternate contact-mean updates with offset optimization. \rev{P/R/Y/K denote pitch/roll/yaw/knee; red, blue, and green boxes encode pitch, roll, and yaw, and dashed outlines mark pitch coupling.}}
\label{fig:flowchart}
\end{figure*}

The key physical constraint available to legged robots is \emph{contact}: during stable double support, both feet remain fixed relative to the ground, and the inter-foot transform must be constant across all postures.
If joint offsets are incorrect, this transform varies with posture, and minimizing its dispersion yields offset estimates.
This idea has appeared implicitly in legged state estimation~\cite{Bloesch2012LeggedStateEstimation,Hartley2020ContactInEKF,Rotella2014LegCalibration}, and Liu and Tang~\cite{Liu2017DoubleSupport} applied double-support constraints to offline humanoid calibration on the NAO platform.
What remains less explicit is how the resulting batch optimization is conditioned by lower-limb joint ordering, especially under the parallel pitch axes common in humanoid legs.

\rev{The AGIBOT A2 uses a roll--yaw--pitch proximal ordering. The AGIBOT A3 uses a pitch--roll--yaw ordering.}

Yamane~\cite{Yamane2011PracticalCalibration} demonstrated practical kinematic calibration for humanoid robots using joint encoders, an IMU, and flat-foot contact constraints.
However, the inherent coupling among the parallel pitch axes of hip-pitch, knee, and ankle-pitch was not formally characterized.
Because consecutive pitch joints rotate about aligned $Y$-axes, their offsets produce collinear rotational Jacobian columns, so only their \emph{sum} is observable from orientation.
We provide a formal Hessian-eigenvector characterization of this coupling and develop decoupling strategies tailored to different kinematic orderings.

The contributions are:
\begin{enumerate}
\item A self-contained lower-limb calibration framework combining inter-foot contact consistency, IMU-aided flat-ground constraints, and Tikhonov regularization with mechanical priors, requiring only onboard joint encoders and a pelvis IMU.
\item \rev{A formal Hessian-based analysis of pitch-axis coupling. The analysis separates information in the data from curvature added by regularization and mechanical priors.}
\item \rev{Validation on A3 simulation and on both real platforms. A same-geometry MuJoCo study isolates the effect of joint ordering. External LiDAR and registered motion-capture tests check the pitch--vertical channel and inter-foot orientation, and a free-standing MuJoCo deployment tests closed-loop locomotion under persistent offsets.}
\end{enumerate}

\section{RELATED WORK}

\subsection{Robot Kinematic Calibration}
Classical manipulator calibration estimates geometric parameters by comparing measured and predicted end-effector poses under Denavit--Hartenberg~\cite{Craig2005Robotics,Mooring1991ManipulatorCalibration} or product-of-exponentials~\cite{Okamura1996POECalibration} parameterizations.
Closed-chain formulations exploit loop constraints for self-calibration without absolute pose measurements~\cite{Wampler1995ImplicitLoop,Khalil2002SelfCalibration,Bennett1991ClosedChainCalib}.
These works address fixed-base serial chains.
Humanoid legs add a floating-base, contact-dependent, parallel-pitch structure with distinct observability challenges~\cite{Khalil2004ModelingIdentification}.

\subsection{Contact-Based Constraints for Legged Robots}
Contact assumptions are central to legged state estimation.
Bloesch et al.~\cite{Bloesch2012LeggedStateEstimation} fuse IMU, encoder, and foot-contact information for real-time pose estimation; Hartley et al.~\cite{Hartley2020ContactInEKF} formulate contact-aided invariant EKF; Camurri et al.~\cite{Camurri2020Pronto} extend this to multi-sensor estimators; Rotella et al.~\cite{Rotella2014LegCalibration} incorporate leg kinematics into humanoid state estimation.
These works use contact constraints for \emph{online state estimation}, not offline calibration.
Our work applies the same physical assumption, foot fixity during double support, to a fundamentally different purpose: estimating persistent joint encoder offsets by dispersion minimization over batch data.

\subsection{Humanoid Calibration}
Yamane~\cite{Yamane2011PracticalCalibration} demonstrated practical kinematic calibration for force-controlled humanoid robots using only joint encoders and an IMU under flat-foot contact; the same work also performed dynamic parameter identification using torque sensors.
Ogawa et al.~\cite{Ogawa2014DynamicsIdentification} added dynamic parameter identification from joint torque sensors and contact forces.
Birbach et al.~\cite{Birbach2012WholeBodySelfCalibration} and Pradeep et al.~\cite{Pradeep2014HumanoidSelfCalibration} self-calibrate the upper body and multi-sensor arms with onboard cameras.
Our work differs from Yamane in three respects.
\rev{First, we formulate the constraint on $\SE$ through inter-foot transform dispersion. We also compare it with Yamane's sole-height method under matched simulated data. The comparison does not show a universal winner.}
Second, we give a formal Hessian-eigenvector analysis of the pitch-axis coupling.
Third, we show how kinematic ordering shapes this coupling, an aspect not analyzed in prior contact-based humanoid calibration.

\subsection{Excitation Design and Observability}
Ayusawa et al.~\cite{Ayusawa2017ExcitationTrajectory} optimize excitation trajectories for inertial parameter identification based on condition-number criteria.
Venture et al.~\cite{Venture2009ExcitationBiped} numerically design motions for biped dynamics identification.
Maier et al.~\cite{Maier2015WholeBody} automatically select calibration configurations by maximizing SVD-based observability indices of the identification Jacobian, achieving accurate whole-body calibration with few measurements.
We extend observability analysis beyond condition numbers and singular-value rankings to explicit eigenvector characterization of the Hessian near-nullspace.
This reveals the geometric origin of weak directions and ties them to the kinematic chain.

\begin{figure*}[!t]
\centering
\includegraphics[width=0.90\textwidth]{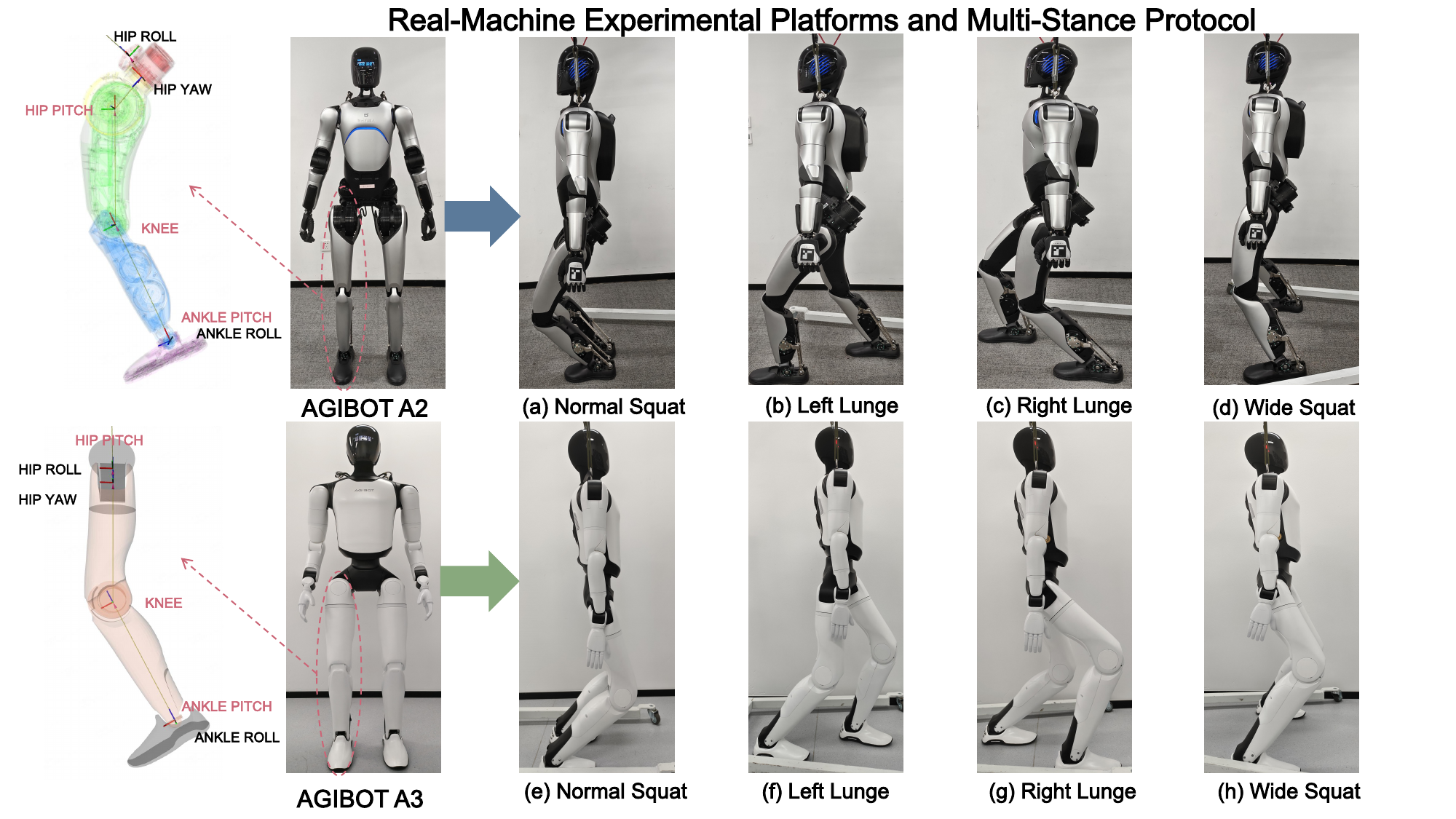}
\caption{\rev{Real-machine A2 and A3 platforms and the four-stance protocol. Only onboard encoders and a pelvis IMU are used.}}
\label{fig:real_photos}
\end{figure*}

\section{METHOD}

\subsection{Kinematic Model and Offset Parameterization}
Consider a humanoid with a 6-DoF lower-limb serial chain per leg.
The two platforms studied are the AGIBOT A2 and A3 humanoid robots, which differ in joint ordering: A3 uses pitch$\to$roll$\to$yaw$\to$knee$\to$ankle-pitch$\to$ankle-roll; A2 uses roll$\to$yaw$\to$pitch$\to$knee$\to$ankle-pitch$\to$ankle-roll.

Let $q_k^{\mathrm{enc}} \in \R^{12}$ denote the encoder readings of both legs at time step $k$, and $\dq \in \R^{12}$ the constant offset vector.
The corrected joint angles are $q_k = q_k^{\mathrm{enc}} + \dq$.
\rev{The forward kinematics from the base to the left and right feet are $T_{BF_L}(q_k)$ and $T_{BF_R}(q_k)$. They are computed from the robot's URDF model.}

\subsection{Inter-Foot Consistency Constraint}
\rev{We use active transforms. $T_{AB}$ maps coordinates from frame $B$ to frame $A$.}
During stable double support, the inter-foot transform:
\begin{equation}
C_k(\dq) = T_{BF_L}(q_k)^{-1} \, T_{BF_R}(q_k)
\label{eq:interfoot}
\end{equation}
must be constant within a contact segment.
\rev{It is the pose of the right foot in the left-foot frame.}
\rev{For a segment of $N$ retained frames, let $C^*$ be the intrinsic mean inter-foot transform on $\SE$~\cite{Pennec2006IntrinsicStatistics,Sola2018MicroLie}:}
\begin{equation}
\revcolor C^*=\arg\min_{C\in\SE}\sum_{k=1}^{N}\left\|\Log\!\left(C^{-1}C_k(\dq)\right)\right\|^2,
\label{eq:frechet_mean}
\end{equation}
\rev{The implementation initializes $C^*$ with $\Exp(\frac{1}{N}\sum_k\Log C_k)$ and refines it by iterative Log--Exp updates after each outer solve.}
The contact-consistency residual is:
\begin{equation}
r_k^{\mathrm{feet}}(\dq) = \Log\!\left((C^*)^{-1} C_k(\dq)\right) \in \R^6.
\label{eq:residual_feet}
\end{equation}
\rev{This residual measures deviation from the segment mean. For multiple segments, $S$ is their count, $\mathcal C_s$ the frame set, and $N_s=|\mathcal C_s|$. Frames are time samples; retained joint-angle cluster representatives are distinct configurations.}
\rev{\emph{Ordering:} $\Log(T)=[\rho_x,\rho_y,\rho_z,\phi_x,\phi_y,\phi_z]^\top$ (translation, then rotation). $J_{\mathrm{data}}$ stacks weighted feet-then-flat rows by segment and time; columns are left/right offsets in the platform order above.}

\subsection{IMU-Aided Flat-Ground Constraint}
A pelvis-mounted IMU provides the gravity direction $g_{\mathrm{imu}}^{(k)}$ in base frame~\cite{Mahony2008NonlinearComplementary}.
\rev{The world frame $W$ uses the opposite gravity direction as its vertical axis. Its yaw is arbitrary.}
Transforming foot positions and normals to the world frame yields a flat-ground residual:
\begin{equation}
r_k^{\mathrm{flat}} = \begin{bmatrix} p_{L,z}^W - p_{R,z}^W \\ z_{L,x}^W \\ z_{L,y}^W \\ z_{R,x}^W \\ z_{R,y}^W \end{bmatrix} \in \R^5,
\label{eq:residual_flat}
\end{equation}
where the first row enforces equal foot heights, and rows 2--5 penalize deviations of each foot normal from the gravity direction.
\rev{A common yaw rotation leaves these terms unchanged. IMU tilt error changes the inferred vertical direction and can bias the result. Uneven or non-coplanar contact can also cause bias.}

\subsection{Optimization Objective}
The calibration objective combines contact consistency, flat-ground constraints, and Tikhonov regularization~\cite{Hansen1998RankDeficient}:
\begin{equation}
\min_{\dq} \sum_{k=1}^{N} \!\left[\lambda_f \|r_k^{\mathrm{feet}}\|^2 + \lambda_g \|r_k^{\mathrm{flat}}\|^2 \right] + \lambda_q \|W_q \!\odot\! \dq\|^2,
\label{eq:objective}
\end{equation}
where $\lambda_f$, $\lambda_g$, $\lambda_q$ are scalar weights, $W_q \in \R^{12}$ is a per-joint weight vector, and $\odot$ denotes element-wise multiplication.
For multiple stances sharing a single $\dq$:
\begin{equation}
\min_{\dq} \sum_{s=1}^{S} \sum_{k \in \mathcal{C}_s} \!\left[\lambda_f \|r_{s,k}^{\mathrm{feet}}\|^2 + \lambda_g \|r_{s,k}^{\mathrm{flat}}\|^2 \right] + \lambda_q \|W_q \!\odot\! \dq\|^2.
\label{eq:multi_bag}
\end{equation}
\rev{Under the platform-specific operating settings, A3 additionally includes a quadratic knee-prior term $\lambda_{\mathrm{knee}}\|\dq_{\mathrm{knee}}-\dq_{\mathrm{knee}}^{\mathrm{prior}}\|^2$ based on~\eqref{eq:knee_prior}, with $\lambda_{\mathrm{knee}}=100$; A2 uses $\lambda_{\mathrm{knee}}=0$. The matched comparisons below use no mechanical prior.}
\rev{The numerical weights for the platform-specific operating settings are:
A3: $(\lambda_f, \lambda_g, \lambda_q, \lambda_{\mathrm{knee}}) = (2, 10, 0.01, 100)$, $W_q = [1,1,1,1.5,1.5,1.5]$ per leg;
A2: $(\lambda_f, \lambda_g, \lambda_q, \lambda_{\mathrm{knee}}) = (1, 100, 1, 0)$, $W_q = [0.5,0.5,0.5,1,1.5,1.5]$ per leg.}
Here $W_q$ follows the platform-specific joint order defined above.
The two-order-of-magnitude difference in $\lambda_q$ reflects the \rev{stronger ridge regularization} used on A2 to bound its pitch-axis near-nullspace.
\rev{The real-data solver retains metres and radians in the residual vector. The scalar weights therefore affect the numerical condition number.}

\subsection{Alternating Optimization}
Since $C_s^*$ depends on $\dq$ through~\eqref{eq:frechet_mean}, we adopt an alternating scheme (Fig.~\ref{fig:flowchart}).
At each outer iteration, $C_s^*$ is recomputed from the current $\dq$ estimate; Ceres Solver~\cite{Agarwal2022Ceres} then updates $\dq$ with $C_s^*$ held fixed using DENSE\_QR~\cite{Nocedal2006NumericalOptimization}.
Convergence occurs within 3--4 outer iterations for all experiments reported.
\rev{We initialize $\dq$ at zero and bound each component to $\pm0.08$\,rad. The inner solve uses at most 200 iterations and a function tolerance of $10^{-8}$. The outer loop uses at most 15 iterations.}

\subsection{Data Preprocessing}
A contact-consistency filter rejects frames violating the fixed-contact assumption.
For each frame $k$, a local Fr\'echet mean $C_k^{\mathrm{local}}$ is computed over a temporal window of $T_w = 2.5$\,s; frames with rotation residual $s_k > 5^\circ$ are discarded.
\rev{The $0.5$\,rad/s gyroscope threshold is a permissive rejection guard. Raw-encoder frames are joint-angle clustered and subsampled to $N_{\max}=500$ per stance; the first solve then removes the largest 10\%, leaving 450 distinct configurations. Filtering is not repeated after correction.}

\section{OBSERVABILITY ANALYSIS AND DECOUPLING}

We distinguish \emph{structural observability}, determined solely by the kinematic topology, from \emph{numerical observability}, which depends on posture diversity and data volume.

\subsection{Hessian Eigenstructure}
\rev{Let $J_{\mathrm{data}}$ be the Jacobian of the contact and flat-ground residuals. We distinguish the data information matrix from the regularized Hessian:}
\begin{equation}
\begin{aligned}
\revcolor H_{\mathrm{data}}&=J_{\mathrm{data}}^\top J_{\mathrm{data}},\\
\revcolor H_{\mathrm{reg}}&=H_{\mathrm{data}}+\lambda_q\diag(W_q^2)+H_{\mathrm{mech}}.
\end{aligned}
\label{eq:hessian}
\end{equation}
\rev{$H_{\mathrm{mech}}$ is the curvature from mechanical priors. It is zero when no such prior is used. Regularization changes $H_{\mathrm{reg}}$ but does not add information to $J_{\mathrm{data}}$. We report $\kappa(J_{\mathrm{data}})$, $\kappa(H_{\mathrm{data}})$, and $\kappa(H_{\mathrm{reg}})$ as numerical conditioning measures. They are not posterior uncertainties.}
\rev{For $N$ retained frames, $r_{\mathrm{data}}\in\R^{11N}$ and $J_{\mathrm{data}}\in\R^{11N\times12}$. The matched four-stance Jacobians have numerical rank 12.}
Eigenvectors corresponding to small eigenvalues identify offset combinations that cannot be reliably estimated.

\subsection{Inherent Pitch-Axis Coupling}

\begin{proposition}[Rotational pitch-axis coupling]
\label{prop:pitch_coupling}
Consider a serial-chain segment containing $n \geq 2$ consecutive revolute joints whose axes remain parallel when expressed in the same residual frame, with common unit direction $a$.
For the rotational component of the inter-foot residual, the corresponding angular-Jacobian columns are collinear.
The orientation residual therefore observes only one linear combination, proportional to $\sum_i \dq_i$; the individual offsets span an $(n{-}1)$-dimensional nullspace of the rotational Jacobian.
\end{proposition}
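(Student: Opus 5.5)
We compute the first-order variation of the rotational part of $r_k^{\mathrm{feet}}$ with respect to each offset and show that every pitch-joint column is the same vector. The inter-foot transform \eqref{eq:interfoot} is a product of joint exponentials. Only the chain through the considered segment depends on $\dq_i$ for $i$ in the segment. So we write the relevant foot pose as $T = A\,\Exp(\hat\xi_1 q_1)\,M_1\,\Exp(\hat\xi_2 q_2)\cdots M_{n-1}\Exp(\hat\xi_n q_n)\,B$. Here $A,B$ collect the remaining factors, the $M_j$ are fixed link transforms, and $\xi_i=(v_i,\omega_i)$ are unit twists with angular parts $\omega_i$.

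Perturbing $q_i\mapsto q_i+\dq_i$ and using the standard adjoint identity gives $\partial T/\partial q_i = T\,\widehat{\mathrm{Ad}_{G_i^{-1}}\xi_i}$ (body form). Here $G_i$ is the transform from the joint-$i$ frame to the residual frame. Then we use the first-order expansion $\Log\big((C^*)^{-1}C_k\Exp(\hat\eta\,\epsilon)\big)=r_k+J_r^{-1}(r_k)\eta\,\epsilon+O(\epsilon^2)$. We evaluate it at the consistent point $r_k=0$ (or to leading order in the small residuals), where $J_r^{-1}=I$. This shows the column for $\dq_i$ equals $\pm\mathrm{Ad}_{G_i^{-1}}\xi_i$; the sign depends on left versus right foot through the inverse in \eqref{eq:interfoot}. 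The rotational block of an adjoint acting on a twist is simply $R_i^\top\omega_i$. Translation mixes in only through the translational rows ($\rho$), not the $\phi$ rows.

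The key step is that, by hypothesis, $R_i^\top\omega_i=a$ for every $i$ in the segment. This holds because the axes are parallel when expressed in the residual frame. Hence all $n$ angular-Jacobian columns equal $\pm a$ (same sign within one leg), so they are collinear. Consequently, for any $\dq$ supported on the segment, the rotational first-order residual change is $a\sum_i\dq_i$ (times the sign). The rotational Jacobian restricted to these $n$ coordinates is the rank-one matrix $a\,\mathbf 1^\top$. Its nullspace is $\{\dq:\sum_i\dq_i=0\}$, of dimension $n-1$, and the only observed combination is $\mathbf 1^\top\dq$, proportional to $\sum_i\dq_i$. We would also remark why the intermediate rotations do not spoil parallelism: rotations about $a$ fix $a$, so intervening pitch joints never rotate subsequent pitch axes. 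Non-pitch joints between them (e.g.\ a non-identity hip roll) would break the hypothesis, which is exactly why the statement requires parallelism in the residual frame.

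The main obstacle is being honest about the linearization. The exact rotational residual is $\Log$ of a rotation, and off the consistent point the right-Jacobian factor $J_r^{-1}(r_k)$ multiplies the columns. I would handle this by observing that $J_r^{-1}(r_k)$ left-multiplies all columns by the same matrix, so collinearity is preserved exactly at any frame. An alternative route reaches the same conclusion: the rotational part of the chain segment is $\prod_i\exp(\hat a q_i)=\exp(\hat a\sum_i q_i)$, so it depends on the offsets only through their sum, even nonlinearly. Translation rows break the degeneracy, since the lever arms $v_i$ differ, which is consistent with the statement's restriction to the orientation residual.
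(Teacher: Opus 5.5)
Your proposal is correct and follows the same argument as the paper. In the common residual frame, every joint of the parallel segment contributes the angular-Jacobian column $a$ (up to a common sign), so $\delta\phi = a\sum_i \dq_i$, and the nullspace restricted to the segment is $\{\sum_i \dq_i = 0\}$, of dimension $n-1$. You justify more than the paper, which asserts $J_{\omega,i}=a$ directly. Your adjoint derivation, your handling of $J_r^{-1}(r_k)$ off the consistent point, and the exact factorization through $\exp(\hat a\sum_i q_i)$ are sound refinements rather than a different route. The $J_r^{-1}(r_k)$ step is valid because the $\phi$-rows of the $\SE$ inverse right Jacobian depend only on the rotational parts of the columns, which are all $\pm a$.
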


\begin{proof}
Let $J_\omega$ denote the angular Jacobian of the rotational residual with all joint axes expressed in the same residual frame.
For the consecutive parallel-axis segment, each joint contributes $J_{\omega,i}=a$ up to the same sign convention, so the first-order orientation perturbation is
\begin{equation}
\delta\phi = J_\omega \dq = a \sum_{i=1}^{n} \dq_i .
\end{equation}
Thus every vector $v$ satisfying $\sum_i v_i=0$ lies in the orientation-only nullspace.
\rev{For the hip-pitch, knee, and ankle-pitch triplet, one convenient basis is:}
\begin{equation}
v_1 \propto e_{\mathrm{kn}} - e_{\mathrm{ap}}, \quad v_2 \propto e_{\mathrm{hp}} - e_{\mathrm{kn}}.
\label{eq:nullspace_basis}
\end{equation}
\end{proof}

\textbf{Remark: translational terms and the full SE(3) residual.}
The translational Jacobian columns $\hat{y} \times (p_{\mathrm{foot}} - p_i)$ differ in magnitude due to distinct lever arms, theoretically breaking the rotational degeneracy.
\rev{Compact humanoid geometry limits lever-arm diversity. The recorded stances are also dominated by sagittal-plane motion. The positional information can therefore be weak relative to measurement noise. This produces a \emph{near}-nullspace rather than an exact one in the full SE(3) Hessian. Millimeter-scale lateral offsets for wiring routing and structure further perturb the ideal coupling. Fig.~\ref{fig:hessian_spectrum} quantifies this effect. On A2, the weakest eigenvalues remain separated from the dominant modes. On A3, the spectrum is more compact. The real platforms also differ in geometry, excitation, and sensor placement. The physical comparison alone cannot isolate joint ordering.}

\subsection{Effect of Kinematic Ordering}

\rev{The coupling structure can depend on the kinematic ordering.}

\rev{\textbf{A3, pitch$\to$roll$\to$yaw:} Hip-pitch is the first joint and is attached to the base. When hip roll and hip yaw are nonzero, the hip-pitch axis is no longer parallel to the knee and ankle-pitch axes in the foot frame.}

\rev{\textbf{A2, roll$\to$yaw$\to$pitch:} Hip-pitch, knee, and ankle-pitch are consecutive joints. There is no intervening nonparallel axis. Translational lever arms provide only weak additional information in the recorded stances.}

\subsection{Decoupling and Resolution Strategies}

We distinguish two mechanisms: \emph{geometric decoupling}, which breaks axis parallelism through posture diversity, and \emph{algebraic resolution}, which injects external prior information to constrain the remaining nullspace.

\subsubsection{\texorpdfstring{\rev{Geometric Decoupling via Roll and Yaw Excitation}}{Geometric Decoupling via Roll and Yaw Excitation}}
\rev{Applying nonzero hip-roll and hip-yaw during data collection rotates the hip-pitch axis out of alignment with knee and ankle-pitch. The flat-ground constraint~\eqref{eq:residual_flat} uses the IMU-derived orientation. It provides independent information about hip-pitch. This reduces the nullspace from 2D to the knee--ankle-pitch coupling.}

\subsubsection{Algebraic Resolution via Knee Prior}
A brief straight-leg standing phase of roughly $5$\,s provides a knee prior:
\begin{equation}
\dq_{\mathrm{knee}} \approx q_{\mathrm{knee}}^{\mathrm{mech}} - \bar{q}_{\mathrm{knee}}^{\mathrm{enc}},
\label{eq:knee_prior}
\end{equation}
incorporated as a quadratic penalty with weight $\lambda_{\mathrm{knee}}$.
\rev{Once the knee is anchored, the pitch-sum constraint determines the remaining coupled pitch component. The knee prior does not make the individual offsets observable. It only selects one decomposition in the coupled subspace.}

\rev{The mechanical zero $q_{\mathrm{knee}}^{\mathrm{mech}}$ is obtained from the URDF and CAD model. Any model error propagates into the selected individual offsets.}
The method assumes the initial offset is already small enough for the robot to reach a near-straight-leg posture; this matches commercial humanoid practice, where factory residual offsets are typically small.

\subsubsection{Strengthened Regularization for A2}
On A2, three consecutive parallel-axis joints preclude geometric decoupling.
\rev{Tikhonov regularization with joint weights suppresses near-nullspace drift. It can bias any individual estimate. Its weights and centers must therefore be reported and tested.}

\rev{For each leg, the orientation residual directly constrains the pitch sum $s=\delta q_{\mathrm{hp}}+\delta q_{\mathrm{knee}}+\delta q_{\mathrm{ap}}$. We report $s_L$, $s_R$, and $s_R-s_L$ separately from the individual offsets.}

\begin{figure}[t]
\centering
\revfigure{\includegraphics[width=0.96\columnwidth]{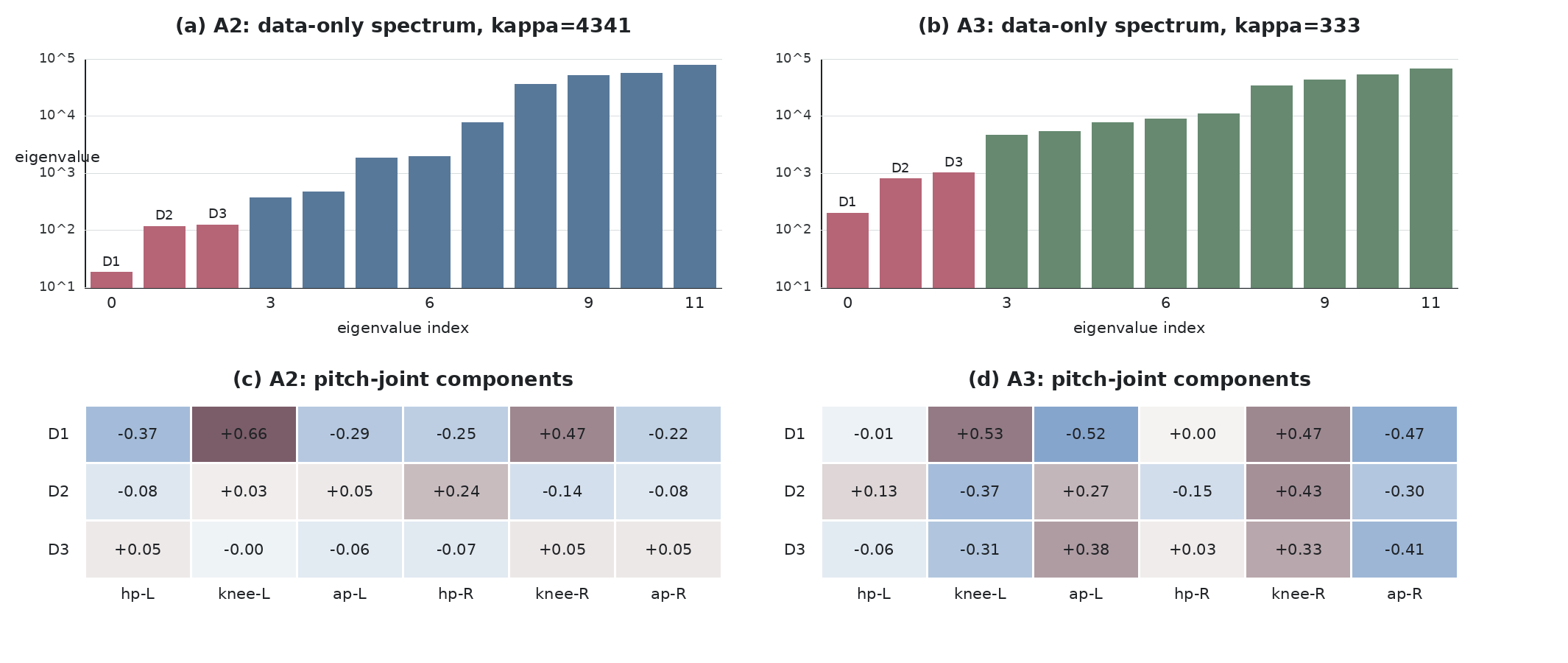}}
\caption{\rev{Matched four-stance real data. Top: $H_{\mathrm{data}}$ eigenvalues, the squared singular values of $J_{\mathrm{data}}$. Bottom: pitch components of its three weakest eigenvectors, D1--D3. The physical-platform gap is not ordering-only.}}
\label{fig:hessian_spectrum}
\end{figure}

\section{EXPERIMENTS}

\subsection{Experimental Setup}

\textbf{A3 Simulation.}
The AGIBOT A3 humanoid with 12 lower-limb DoF is simulated in MuJoCo~\cite{MuJoCo} with weld constraints fixing both feet.
\rev{The A3 simulation uses one natural double-support squat MuJoCo bag with continuous posture variation and retains 450 frames.}
\rev{This single stance has $\kappa(H_{\mathrm{reg}})\approx1.1\times10^4$.}
A known multi-degree offset pattern is injected symmetrically across joint pairs.
This injection is intentionally large enough to rise above sensor noise and session variability, yet small enough to remain in the feasible static-contact regime.
It is therefore a conservative but physically plausible recovery test.
The contact, gravity, offset, and knee-prior weights are selected on a separate pilot tuning set and then kept fixed for each platform.

\textbf{A2 Real Machine.}
The AGIBOT A2 humanoid uses four recordings: Stance~I a normal squat, II a left lunge, III a right lunge, and IV a wide squat, with several hundred static frames retained per stance.
\rev{The feet remain fixed while the posture changes continuously. Each stance contributes 450 retained frames.}
Four stances are used as a conservative operating point: testing showed that the first few diverse postures close the dominant observability gaps, while additional poses mainly increase collection time with little numerical gain.
Stronger offset regularization is used on A2 to suppress its pitch-axis near-nullspace.
The real-machine platforms and stance protocol are shown in Fig.~\ref{fig:real_photos}.

\textbf{A3 Real Machine.}
The A3 real-machine experiments use the same four-stance protocol.
The same weight-selection procedure is used, with milder offset regularization because the A3 ordering gives better geometric decoupling.
\rev{The platform-specific A3 operating run and the repeatability sessions use a knee prior from an initial straight-leg phase; the matched comparison is prior-free.}

\rev{\textbf{Controlled MuJoCo comparison.}
We also build two models with the same geometry, masses, feet, IMU, actuators, gains, targets, and noise.
Both models use four stances, scaling translation by 1\,mm and rotation/normal by 1\,mrad.
Only the proximal R-Y-P or P-R-Y ordering changes.
This experiment isolates the ordering effect.}

\textbf{Metrics.}
\rev{The signed per-joint injection error is $e_i=\hat{\dq}_i-\dq_{\mathrm{inject},i}$ and its magnitude is $\epsilon_i=|e_i|$. For real recordings, we use the paired error $\Delta_i$. It subtracts the no-injection estimate from the injected estimate before removing the known injection. This metric tests estimator consistency. It does not measure the native mechanical zero. We report data-only and regularized condition numbers separately.}

\subsection{A3 Simulation Results}

Table~\ref{tab:a3_sim_inject} presents the injection-recovery results.
\rev{The recovery error peaks at $0.27^\circ$ on hip-pitch. It concentrates along the weakly-observed pitch directions of Proposition~\ref{prop:pitch_coupling}. The contact residual constrains the pitch sum, while the anchored knee fixes the selected decomposition within the weak pitch chain. This test shows response to an injected offset under the declared objective. It does not prove that every individual offset is data-observable.}

\rev{In matched A2-like/A3-like MuJoCo models, ideal zero-noise recovery is below $1.2\times10^{-13}$ degrees; weld and settling give RMS floors of $0.092^\circ$ and $0.121^\circ$, respectively.}

\begin{table}[!tbp]
\centering
\caption{\rev{A3 simulation injection recovery for one double-support squat MuJoCo bag with continuous posture variation. The table reports signed error $e_i=\hat{\dq}_i-\dq_{\mathrm{inject},i}$; RMS and maximum rows use $|e_i|$.}}
\label{tab:a3_sim_inject}
\footnotesize
\setlength{\tabcolsep}{2.5pt}
\revcolor
\begin{tabular}{@{}lrrr@{}}
\toprule
Joint & Injected [$^\circ$] & Estimated [$^\circ$] & Error [$^\circ$] \\
\midrule
L hip pitch   & $+1.719$ & $+1.993$ & $+0.274$ \\
L hip roll    & $+2.292$ & $+2.216$ & $-0.076$ \\
L hip yaw     & $-1.146$ & $-1.094$ & $+0.052$ \\
L knee        & $-2.292$ & $-2.227$ & $+0.065$ \\
L ankle pitch & $+1.146$ & $+1.147$ & $+0.001$ \\
L ankle roll  & $-1.719$ & $-1.671$ & $+0.047$ \\
R hip pitch   & $+1.719$ & $+1.459$ & $-0.260$ \\
R hip roll    & $+2.292$ & $+2.220$ & $-0.072$ \\
R hip yaw     & $-1.146$ & $-1.206$ & $-0.061$ \\
R knee        & $-2.292$ & $-2.231$ & $+0.061$ \\
R ankle pitch & $+1.146$ & $+1.146$ & $+0.000$ \\
R ankle roll  & $-1.719$ & $-1.755$ & $-0.036$ \\
\midrule
\multicolumn{3}{@{}l}{RMS error} & $\mathbf{0.120}$ \\
\multicolumn{3}{@{}l}{Maximum absolute error} & $\mathbf{0.274}$ \\
\multicolumn{3}{@{}l}{$\kappa(H_{\mathrm{reg}})$} & $\mathbf{1.11 \times 10^4}$ \\
\bottomrule
\end{tabular}
\end{table}

\begin{table*}[!t]
\centering
\caption{\rev{Real-machine injection recovery under the platform-specific operating settings. All joint values are in degrees. $\Delta_i$ measures paired injection consistency. It does not measure native mechanical-zero accuracy.}}
\label{tab:a2_vs_a3_real_inject}
\footnotesize
\setlength{\tabcolsep}{3.0pt}
\revcolor
\begin{tabular}{@{}l|rrrr|rrrr@{}}
\toprule
& \multicolumn{4}{c|}{A2 real robot} & \multicolumn{4}{c}{A3 real robot} \\
\cmidrule(lr){2-5} \cmidrule(l){6-9}
Joint & Base & Injected & Estimated & $\Delta$ & Base & Injected & Estimated & $\Delta$ \\
\midrule
\textbf{L hip pitch}   & $\mathbf{+0.430}$ & $\mathbf{+1.719}$ & $\mathbf{+2.132}$ & $\mathbf{-0.016}$ & $\mathbf{+1.035}$ & $\mathbf{+1.719}$ & $\mathbf{+2.904}$ & $\mathbf{+0.150}$ \\
L hip roll    & $-1.127$ & $+2.292$ & $+1.157$ & $-0.008$ & $+0.495$ & $+2.292$ & $+2.802$ & $+0.016$ \\
L hip yaw     & $+0.244$ & $-1.146$ & $-0.894$ & $+0.008$ & $+0.273$ & $-1.146$ & $-0.887$ & $-0.015$ \\
\textbf{L knee}        & $\mathbf{+1.154}$ & $\mathbf{-2.292}$ & $\mathbf{-1.111}$ & $\mathbf{+0.027}$ & $\mathbf{-0.208}$ & $\mathbf{-2.292}$ & $\mathbf{-2.506}$ & $\mathbf{-0.006}$ \\
\textbf{L ankle pitch} & $\mathbf{-0.450}$ & $\mathbf{+1.146}$ & $\mathbf{+0.685}$ & $\mathbf{-0.011}$ & $\mathbf{-0.850}$ & $\mathbf{+1.146}$ & $\mathbf{+0.290}$ & $\mathbf{-0.006}$ \\
L ankle roll  & $+0.020$ & $-1.719$ & $-1.699$ & $+0.000$ & $+0.340$ & $-1.719$ & $-1.347$ & $+0.032$ \\
\textbf{R hip pitch}   & $\mathbf{+0.521}$ & $\mathbf{+1.719}$ & $\mathbf{+2.232}$ & $\mathbf{-0.007}$ & $\mathbf{+1.001}$ & $\mathbf{+1.719}$ & $\mathbf{+2.583}$ & $\mathbf{-0.137}$ \\
R hip roll    & $+1.115$ & $+2.292$ & $+3.402$ & $-0.005$ & $-0.588$ & $+2.292$ & $+1.712$ & $+0.008$ \\
R hip yaw     & $+0.735$ & $-1.146$ & $-0.408$ & $+0.003$ & $-0.194$ & $-1.146$ & $-1.321$ & $+0.019$ \\
\textbf{R knee}        & $\mathbf{+0.380}$ & $\mathbf{-2.292}$ & $\mathbf{-1.893}$ & $\mathbf{+0.019}$ & $\mathbf{-0.090}$ & $\mathbf{-2.292}$ & $\mathbf{-2.388}$ & $\mathbf{-0.006}$ \\
\textbf{R ankle pitch} & $\mathbf{-0.858}$ & $\mathbf{+1.146}$ & $\mathbf{+0.276}$ & $\mathbf{-0.011}$ & $\mathbf{+0.024}$ & $\mathbf{+1.146}$ & $\mathbf{+1.164}$ & $\mathbf{-0.006}$ \\
R ankle roll  & $-1.386$ & $-1.719$ & $-3.103$ & $+0.001$ & $+0.427$ & $-1.719$ & $-1.321$ & $-0.030$ \\
\midrule
RMS $|\Delta|$ [$^\circ$] & \multicolumn{4}{c|}{$\mathbf{0.012}$} & \multicolumn{4}{c}{$\mathbf{0.061}$} \\
$\kappa(H_{\mathrm{reg}})$ & \multicolumn{4}{c|}{$\mathbf{5.03\times10^3}$} & \multicolumn{4}{c}{$\mathbf{7.22\times10^1}$} \\
\bottomrule
\end{tabular}
\end{table*}

\subsection{A2 and A3 Real-Machine Results}

\rev{Table~\ref{tab:a2_vs_a3_real_inject} retains the platform-specific operating settings; its A3 run uses the knee prior with $\lambda_{\mathrm{knee}}=100$. Its value $\kappa(H_{\mathrm{reg}})=72.27$ is therefore prior-assisted.}
\rev{These settings use different residual weights, joint weights, and ridge strengths. The two condition numbers in this table are therefore not a controlled test of joint ordering. The paired error $\Delta_i$ can also cancel a shared regularization bias. A small $\Delta_i$ does not prove that an individual offset is accurate.}

\rev{We repeated the comparison with common weights and no mechanical prior. The data-only condition number $\kappa(H_{\mathrm{data}})$ is $4340.79$ on A2 and $333.23$ on A3. When the segment mean is recomputed during differentiation, the values are $6254.53$ and $771.61$. This is only platform-level evidence. The robots still differ in geometry, excitation, IMU mounting, and model error.}

\rev{The controlled MuJoCo pair isolates the proximal ordering. For R-Y-P, $\kappa(J_{\mathrm{data}})$ is $348.90$ and $\kappa(H_{\mathrm{data}})$ is $121728.69$. For P-R-Y, the values are $90.69$ and $8224.49$. This result supports an ordering effect for the tested geometry and excitation.}

\rev{In the matched data-only A2 run, the six individual pitch errors have an RMS value of $0.366^\circ$, while the errors in $s_L$, $s_R$, and $s_R-s_L$ are $-0.0053^\circ$, $+0.0165^\circ$, and $+0.0218^\circ$. Moving both knee-prior centers from $-1^\circ$ to $+1^\circ$ changes individual estimates by $0.59^\circ$ to $1.90^\circ$, but the three pitch combinations by at most $0.0071^\circ$.}

\subsection{\texorpdfstring{\rev{Comparison with the Closest Method}}{Comparison with the Closest Method}}
\rev{We implemented the kinematic objective in Eqs.~(1)--(5) of Yamane~\cite{Yamane2011PracticalCalibration}. The implementation estimates one root orientation for each frame and one shared offset vector. It uses the IMU orientation term and the variance of eight sole-vertex heights. We use the same simulated states, noise, and data split for both methods. Over ten noise seeds, the proposed method and Yamane's method give similar results on A2. Yamane's method gives a lower individual-offset RMS value on A3. Held-out foot-pose results also depend on the platform. The results do not support a claim of universal superiority. The proposed objective directly adds inter-foot $x$, $y$, and yaw channels.}

\subsection{Non-Injection Held-Out Validation}

To complement injection-recovery, we evaluate flat-ground consistency on held-out bags without artificial offset injection.
Offsets are estimated from a four-stance set and validated on a separate batch: for A3 from an independent session, and for A2 from excluded recordings with measurable baseline mismatch.

Fig.~\ref{fig:heldout_validation} shows per-frame held-out foot-height residuals before and after calibration.
On A3, pooled RMS foot-height residual decreases from $4.26$ to $2.20$\,mm; on A2, from $8.03$ to $1.43$\,mm; sole-normal residuals change in the same direction on both platforms.
This is not external ground truth, but it tests whether the calibrated offsets improve excluded data under the method's own flat-ground assumptions.
\rev{Each platform uses three held-out sessions with 500 frames per session.}

\begin{figure}[!tbp]
\centering
\includegraphics[width=\columnwidth]{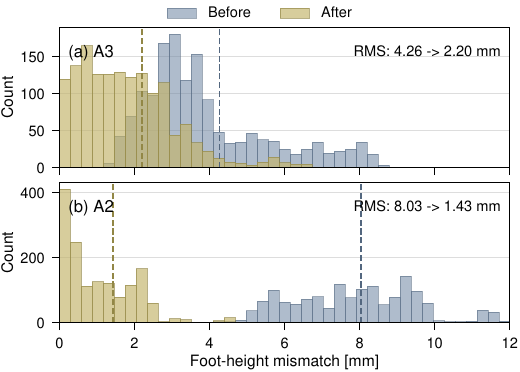}
\caption{Held-out flat-ground validation without injected offsets. Histograms show per-frame foot-height residuals before and after calibration, pooled over validation recordings not used for optimization. \rev{Each platform contributes 1500 held-out frames.} Dashed lines denote RMS values.}
\label{fig:heldout_validation}
\end{figure}

\subsection{External Validation}
\label{sec:lidar_validation}

The tests above share the FK and flat-ground model of the calibration objective.
They measure internal consistency rather than absolute external accuracy.
\rev{We therefore retain the LiDAR-inertial cross-check as an external test of the pitch--vertical channel.}

\rev{Leg kinematic--inertial odometry fuses pelvis-IMU propagation with stance-foot forward kinematics~\cite{Bloesch2012LeggedStateEstimation,Hartley2020ContactInEKF,Rotella2014LegCalibration}. During contact, the stance foot is fixed. The base pose is}
\begin{equation}
\revcolor
T^{W}_{B}(t) = T^{W}_{F}\;T_{BF}\!\big(q^{\mathrm{enc}}(t)+\dq\big)^{-1},
\label{eq:legodom}
\end{equation}
\rev{Here $T^{W}_{F}$ is the fixed stance-foot pose in the world frame. LiDAR-inertial odometry uses FAST-LIO2~\cite{Xu2022FastLIO2}. It uses neither leg encoders nor FK.}

\rev{Large physical offsets could cause a fall. We therefore add known offsets to two recorded walks. A symmetric hip-pitch offset produces vertical drift that is linear in the offset. The sensitivity is $1.04$\,m per degree. Calibration moves the trajectory toward the LiDAR reference in Fig.~\ref{fig:lidar_val}. Table~\ref{tab:lidar_val} reports 16 held-out segments. The vertical RMS error decreases by $25.5\%$.}

\rev{This test checks only the pitch--vertical channel and not absolute zeros. On a separate A3, we register bilateral optical rigid bodies in a flat, parallel-foot reference recording and evaluate a second placement without fitting FK or offsets. The held-out inter-foot orientation error decreases from $5.223^\circ$ to $4.085^\circ$ (21.8\%).}

\begin{figure}[!tbp]
\centering
\includegraphics[width=\columnwidth]{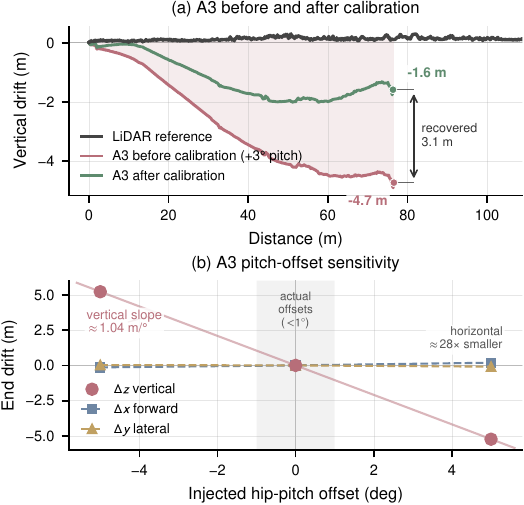}
\caption{\rev{A3 validation against LiDAR-inertial odometry. The left panel shows a held-out walk with a $+3^\circ$ pitch offset. The right panel shows a vertical sensitivity of $1.04$\,m per degree.}}
\label{fig:lidar_val}
\end{figure}

\begin{table}[!tbp]
\centering
\caption{\rev{A3 leg-odometry RMS error against LiDAR before and after correction. Values are the mean and standard deviation over 16 held-out segments.}}
\label{tab:lidar_val}
\footnotesize
\setlength{\tabcolsep}{3.5pt}
\revcolor
\begin{tabular}{@{}lccc@{}}
\toprule
RMS error [mm] & Before & After & Reduction \\
\midrule
Vertical $z$ & $319\pm258$ & $238\pm137$ & $\mathbf{25.5\%}$ \\
Forward $x$                  & $287\pm205$ & $285\pm204$ & $\approx0$ \\
Lateral $y$                  & $287\pm179$ & $288\pm180$ & $\approx0$ \\
Total 3-D                    & $576\pm274$ & $494\pm263$ & $14.2\%$ \\
\bottomrule
\end{tabular}
\end{table}

\subsection{\texorpdfstring{\rev{Closed-Loop Simulation}}{Closed-Loop Simulation}}
\rev{We deploy the same installed A3 walking policy in free-standing MuJoCo before and after applying corrections estimated from independent static data and frozen before locomotion.
Across 35 trials, the policy, gains, model, and commands remain fixed, with paired noise seeds.
At $3^\circ$, foot-strike and pelvis-trajectory RMS deviations fall from $1190.61$ to $26.57$\,mm and from $1224.10$ to $27.16$\,mm.
All trials remain upright; torque and power do not improve consistently. This supports nominal tracking, not expanded stability, energy, or hardware claims.}

\subsection{Stance Diversity and Saturation}

\rev{Table~\ref{tab:ablation} and Fig.~\ref{fig:ablation} show that most A2 conditioning gain comes from the first added diverse stance, with $\kappa(H_{\mathrm{reg}})$ dropping from $1.27\times10^5$ to $6.19\times10^3$.
Further stances saturate, as added contact geometries introduce stance-dependent contact and flatness errors.
This supports a compact four-stance protocol rather than a larger calibration set.}

\begin{table}[!tbp]
\centering
\caption{\rev{Real-A2 stance diversity; I--IV denotes the four stance sets. Eig. 0--2 and $\kappa$ are from $H_{\mathrm{reg}}$.}}
\label{tab:ablation}
\footnotesize
\setlength{\tabcolsep}{1.0pt}
\revcolor
\begin{tabular}{@{}lccccc@{}}
\toprule
Stances & Poses & Eig. 0 & Eig. 1 & Eig. 2 & $\kappa$ \\
\midrule
1 & I only         & 1.1  & 2.4  & 16.3  & $1.27\times10^5$ \\
2 & I + II         & 44.4 & 76.8 & 387.8 & $6.19\times10^3$ \\
3 & I + II + III   & 91.4 & 177.5 & 753.4 & $4.52\times10^3$ \\
4 & I--IV & 109.8 & 558.9 & 1050.6 & $5.03\times10^3$ \\
\bottomrule
\end{tabular}
\end{table}

\begin{figure}[!tbp]
\centering
\includegraphics[width=\columnwidth]{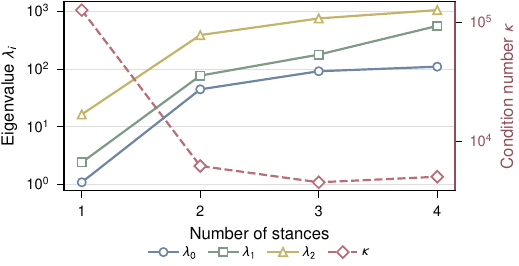}
\caption{\rev{A2 stance diversity for $H_{\mathrm{reg}}$: three smallest eigenvalues and $\kappa(H_{\mathrm{reg}})$.}}
\label{fig:ablation}
\end{figure}

\subsection{Ablation: Regularization, Constraints, and Priors}
\label{sec:ablation_reg}
\rev{On A2, varying $\lambda_q$ barely changes the data fit but moves weak pitch offsets: it selects a stable solution without creating observability. Removing the flat-ground term shifts frontal and transverse hip offsets by $2^\circ$ to $5^\circ$, confirming its out-of-plane information.}
\rev{Under the platform-specific operating setting, the A3 knee prior improves $\kappa(H_{\mathrm{reg}})$ from approximately $333$ to $72.27$. This change is prior curvature in $H_{\mathrm{reg}}$; it does not add information to $H_{\mathrm{data}}$.}

\subsection{A3 Multi-Trial Repeatability}

\begin{table}[H]
\centering
\caption{\rev{A3 real-machine correction repeatability over three rerun sessions. Values are in degrees; they are not mechanical-zero errors.}}
\label{tab:a3_repeatability}
\scriptsize
\setlength{\tabcolsep}{6pt}
\renewcommand{\arraystretch}{0.88}
\revcolor
\begin{tabular}{@{}lrr@{}}
\toprule
Joint & Mean $\dq$ & Standard deviation \\
\midrule
Left hip pitch & $-0.014$ & $0.291$ \\
Right hip pitch & $-0.716$ & $0.597$ \\
Left hip roll & $-0.044$ & $0.165$ \\
Right hip roll & $+0.150$ & $0.418$ \\
Left hip yaw & $+0.193$ & $0.238$ \\
Right hip yaw & $+0.805$ & $1.050$ \\
Left knee & $-0.235$ & $0.114$ \\
Right knee & $-0.258$ & $0.185$ \\
Left ankle pitch & $-0.895$ & $0.104$ \\
Right ankle pitch & $-0.131$ & $0.190$ \\
Left ankle roll & $+0.465$ & $0.329$ \\
Right ankle roll & $-0.038$ & $0.764$ \\
\bottomrule
\end{tabular}
\end{table}

\section{CONCLUSION}

We presented a self-contained lower-limb joint-offset calibration framework exploiting inter-foot contact consistency, IMU-aided flat-ground constraints, and mechanical priors.
\rev{The Hessian analysis shows that parallel pitch axes create weak directions. Matched physical-platform data show better conditioning on A3. A same-geometry simulation supports an ordering effect. A2 pitch sums are more stable than their individual decomposition. Experiments on simulation and real humanoid platforms show that static double-support stances improve contact consistency using only onboard sensing.}

\textbf{Limitations and future work.}
\rev{The method assumes fixed, approximately coplanar double support, and individual pitch offsets can remain prior-dependent. We interpret the estimates as contact-consistent corrections. LiDAR checks pitch--vertical, while registered motion capture independently validates held-out inter-foot orientation; the closed-loop test is A3 simulation. Individual mechanical zeros, translation accuracy, and real-machine locomotion deployment remain future work.}

\bibliographystyle{IEEEtran}
\bibliography{refs}

\end{document}